\begin{document}
\title{Exponential Negation of a Probability Distribution}
%
%
\author{Qinyuan Wu\inst{1} \and
Yong Deng\inst{1,2,3,*}\and
Neal Xiong\inst{4}}
\authorrunning{Qinyuan Wu, Yong Deng}
%
\institute{* Institute of Fundamental and Frontier Science, University of Electronic Science and Technology of China, Chengdu, 610054, China \and
School of Education, Shannxi Normal University, Xi'an, 710062, China \and
School of Knowledge Science, Japan Advanced Institute of Science and Technology, Nomi, Ishikawa 923-1211, Japan\and
Department of Mathematics and Computer Science,Northeastern State University,611 N. Grand Ave, \#323, Webb center, Tahlequah, OK 74464, America
\email{*dengentropy@uestc.edu.cn}\\
}

\maketitle              
\begin{abstract}
  Negation operation is important in intelligent information processing.
  Different with existing arithmetic negation, an exponential negation is presented in this paper. The new negation can be seen as a kind of geometry negation.
  Some basic properties of the proposed negation are investigated, we find that the fix point is the uniform probability distribution.
  The proposed exponential negation is an entropy increase operation and  all the probability distributions will converge to the uniform distribution after multiple negation iterations. The convergence speed of the proposed negation is also faster than the existed negation.
  The number of iterations of convergence is inversely proportional to the number of elements in the distribution.  Some numerical examples are used to illustrate the efficiency of the proposed negation.

\keywords{  Negation\and Exponential negation\and Probability distribution\and Entropy }
\end{abstract}
\section{Introduction}
Knowledge representation and uncertainty measure are important issues in artificial intelligence \cite{kanal2014uncertainty,fu2020comparison,xu2018belief,Fei2019Evidence}.
Probability distribution is an efficient way to describe uncertainty, which has achieved remarkable results in artificial intelligence \cite{solomonoff1986application}, quantum science \cite{pitowsky1989quantum} and many other scientific fields \cite{huelsenbeck2001bayesian} \cite{feller2008introduction}.
However, sometimes it is very difficult to analyze the knowledge directly by the probability distribution itself. A possible way is to analyze the probability distribution t from the negative perspective.
Negative representation is an efficient way to present and analyze information from the opposite side \cite{anjaria2020negation}.
For example, in the process of proving mathematical theorems, contradiction is usually an effective method.
Through this example, we can find that negation can show us the other side of knowledge and help us indirectly acquire the knowledge we need.
\par

The negation operation has been heavily studied.
In previous researches, Zadeh first proposed the negation of probability distribution in his BISC(Berkeley Initiative in Soft Computing) blog.
After that, Yager proposed an important method of negation which has the maximum entropy allocation \cite{yager2014maximum}.
The basic framework of Yager's negation is subtracting a probability by 1.
To some extent, Yager got the expression of negation based on subtraction, which can be seen as a kind of arithmetic negation.
Yager also indicate some basic properties of this negation which gets some superior properties that accord with our intuitive understanding of the laws of nature.
Inspired by Yager's negation, the negation of joint and marginal probability distributions in the binary case is proposed \cite{srivastava2018some}.
To study more about the nature of negative distribution, an uncertainty metrics is developed to measure the uncertainty associated with negative probability distributions \cite{srivastava2019uncertainty}.
A new method is also proposed based on Tsallis entropy \cite{zhang2020extension} which will degenerate to Yager's method in some special cases.
In order to verify the new entropy with good performance in uncertainty measurement, a new Pythagoras fuzzy number inversion method is proposed and the new method can be applied in a service supplier selection system \cite{mao2020negation}.
Combining the negative distribution of probability with evidence theory is also a popular application direction.
According to this idea, a method for determining the weight of the sum of probability distribution based on MCDM is proposed recently \cite{sun2020determining}.
Deng and Jiang proposed a negation transformation for belief structure which is based on maximum uncertainty allocation \cite{deng2020negation}.
The negation of probability distribution is also applied to target recognition based on sensor fusion \cite{gao2019generalization}.
By introducing Dempster-Shafer theory, the negation of probability is extended to basically change the allocation.
Some methods for the negation of BPA are proposed to be seen as a new perspective of uncertainty measure \cite{yin2018negation}.
Some matrix methods for the negation of BPA have also been proposed \cite{luo2019matrix}.
Based on the negation of BPA, a method for conflict management of evidence theory is proposed.
Due to the importance, some other negation of different uncertainty \cite{Xiao2020maximum}, such as Z numbers, are developed \cite{liu2020negation}.

\par
However, how to reasonably negate the probability distribution is still an open issue. All the existing negation methods have the similar structure with Yager's negation, whose aim is to achieve the maximum entropy in the final status. However, one shortcoming is that the maximum entropy can not be obtained when only two elements. For example, a probability as $P(A)=0.7$, $P(B)=0.3$.
All existing methods will obtain the result either $P(A)=0.7$, $P(B)=0.3$ or $P(A)=0.3$, $P(B)=0.7$. None of existing methods can achieve the maximum entropy. To address this issue, an exponential negation is proposed in this article. We solve this problem based on exponent function.
In short, Yager's negation can be seen as arithmetic negation and the proposed method can be seen as geometry negation to some degree.
Although the form of two transformations is different, the transformation we proposed have the same properties as the transformation of Yager in many fundamental ways.
Specially, the two negations are completely different when we have a negation transformation on binary probability distribution.
And the result of exponential negation can better meet our intuitive understanding.
\par
The paper is structured as follows: Section~\ref{one} introduces Yager's negation.
The exponential negation is presented in Section~\ref{two}, we also investigate some basic properties of the negation.
In Section~\ref{three}, we give some numerical examples to support the illustrated negation.
We get the conclusion in Section~\ref{five}.

\section{PRELIMINARIES}
Knowledge representation and uncertainty measure are important issues in artificial intelligence \cite{Fujita2020Heuristic,liu2017change,Deng2020ScienceChina,zhou2019assignment,meng2020uncertainty}.
A lot of methods have been presented, including intuitionistic fuzzy sets \cite{garg2019linguistic}, Z-number \cite{Jiang2019Znetwork}, evidence theory \cite{song2020selfadaptive}, D numbers \cite{liu2020anextended}, evidential reasoning \cite{zhou2017evidential,liao2020deng}, complex evidence theory \cite{Xiao2019complexmassfunction,Xiao2020CED,Xiao2020CEQD},
which are applied in classification \cite{liu2018classifier,xu2017data,li2020robust}, information fusion \cite{Xiao2020GIQ,lai2020multi}, medical diagnosis \cite{cao2019multi,wang2017rumor}, fault diagnosis \cite{Xiao2020Novel},  intrusion detection \cite{mi2018reliability}, reliability analysis \cite{mi2018reliability}, risk analysis and assessment   \cite{pan2020multi,zhang2015multimodel,zhanglimao2019,8926527,Zhou2020RIskAssessment}, and decision making \cite{tang2020dynamic,Feng2020Enhancing,fu2020multiple,fei2020mcdmPFS}.

\label{one}
\subsection{Yager's Negation}
A maximum entropy negation of a probability distribution is proposed by Yager \cite{yager2014maximum}.
Suppose the frame of reference is the set $X=\{x_1,x_2,\dots,x_n\}$, $P=\{p_1,p_2,\dots,p_n\}$ is a probability distribution on $X$.
Yager's negation can be expressed as follows \cite{yager2014maximum}:
\begin{equation}
  \begin{split}
    \overline{p_i}=\frac{1-p_i}{n-1}
    \end{split}
  \end{equation}
  \par
Obviously this negation satisfies some basic properties of probability:
\begin{equation}
  \begin{split}
    \sum_{i}\overline{p_i}=1
    \end{split}
  \end{equation}
\begin{equation}
    \begin{split}
    \overline{p_i}\in[0,1]
    \end{split}
  \end{equation}
\par
For many other inversion operations like real number's negation or the negation of a matrix, the final result which has been negative twice is equal to the initial one.
However, Yager's negation is not involutionary.
\begin{equation}
  \begin{split}
      \overline{\overline{p_i}}\ne p_i
    \end{split}
  \end{equation}
  \par
In the further research, Yager indicated the reason of this unusual property. Yager's negation operation will increase the entropy of a system, and then he raised a new method to measure the entropy of a distribution. The entropy of the distribution and the negative distribution is measured following this two formulas:
\begin{equation}
  \begin{split}
      H(P)=\sum_{i}(1-p_i)(p_i)=1-\sum_{i}p_i^2
    \end{split}
  \end{equation}
\begin{equation}
    \begin{split}
        H(\overline{P})=\sum_{i}(1-\overline{p_i})(\overline{p_i})=1-\sum_{i}\overline{p_i}^2
      \end{split}
    \end{equation}
    \par
Then the entropy increasing in this negation process is shown as follows:
\begin{equation}
  \begin{split}
      H(\overline{P})-H(P)&=\sum_{i}p_i^2-\sum_{i}\overline{p_i}^2\\
      &=\sum _{i}p_i^2-\frac{1}{(n-1)^2}\sum_{i}(1-2p_i+p_i^2)\\
      &=\frac{(n-2)}{(n-1)^2}(n\sum_{i}p_i^2-1)
    \end{split}
    \label{H_yager}
  \end{equation}
\par
As can be seen in Eq.\eqref{H_yager}, it is positive and the negation is an entropy increasing operation.

\section{EXPONENTIAL NEGATION}
\label{two}
In this section, a new method of negation of a probability distribution is proposed and some properties of it will be discussed.
\par
Suppose a set of random variables, $X=\{x_1,x_2,\dots,x_n\}$. It is a mathematical abstract representation of event $A=\{A_1,A_2,\dots,A_n\}$. And let $P=\{p_1,p_2,\dots,p_n\}$ as a probability distribution on $X$.
 $p_i$ represents the probability of occurrence of event $A_i$ and each $p_i$ corresponds to an event.
For a probability distribution, two conditions are satisfied as follows :
\begin{equation}
  \begin{split}
       \sum{p_i}=1
  \end{split}
  \end{equation}

  \begin{equation}
  \begin{split}
       p_i\in[0,1]
  \end{split}
  \end{equation}
\par
Then, the distribution $\overline{P}=\{\overline{p_1},\overline{p_2},\dots,\overline{p_n}\}$ is used to describe the negation of distribution $P$.
$\overline{p_i}$ indicates the information or knowledge of 'not $A_i$'. And each negation probability $\overline{p_i}$ has a corresponding 'not $A_i$'.
\par
\begin{definition}
    Given a probability distribution $P=\{p_1,p_2,...,p_n\}$ on $X=\{x_1,x_2,...,x_n\}$, the corresponding exponential negation of the probability distribution is defined as follows:
\begin{equation}
  \begin{split}
      \overline{p_i}&=\frac{e^{-p_i}}{\sum_i{e^{-p_i}}}\\
      &=(\sum_{i=1}^{n}e^{-p_i})^{-1}e^{-p_i}\\
      &=Ae^{-p_i}
    \end{split}
  \end{equation}
  \label{negation_definition}
\end{definition}
Where e is the natural base.
\par
$A$ is a normal number, shown as follows:
\[
A=(\sum_{i=1}^{n}e^{-p_i})^{-1}
\]
\par
$\overline{P}$ is still a probability distribution and satisfied as following conditions.
\begin{equation}
  \overline{p_i}\in[0,1]
\end{equation}
\begin{equation}
    \sum_{i}\overline{p_i}=1
\end{equation}
\par
Some examples are shown as follows.
\begin{example}
Give a probability distribution as follows:
    \begin{equation}\nonumber
        \begin{split}
          {P}:{p_1}=&1     \\
          {p_i}=&0, (i\ne1,i=2,3,...,n)     \\
        \end{split}
      \end{equation}
    Its corresponding negation, according to Iq \ref{negation_definition}, is shown as follows.\\
        \begin{equation}\nonumber
        \begin{split}
          \overline{P}:\overline{p_1}=&(e^{-1}+n-1)^{-1}e^{-1}     \\
          \overline{p_i}=&(e^{-1}+n-1)^{-1}, (i\ne1,i=2,3,...,n) \\
        \end{split}
      \end{equation}
      Considering $n=2$, we can get the result as:
      \begin{equation}\nonumber
        \begin{split}
          \overline{P}:\overline{p_1}=&(e^{-1}+1)^{-1}e^{-1}=0.2689   \nonumber  \\
          \overline{p_i}=&(e^{-1}+1)^{-1}=0.7311,(i\ne1,i=2,3,...,n)   \nonumber \\
        \end{split}
      \end{equation}
\end{example}
  Specially, when $n=2$, $p_1=1$, $p_2=0$, this distribution is a very special case.
  This binary distribution means a certain system which gets the minimum entropy and maximum information.
  Performing an iterative negation operation on this system will increase its entropy and converge to a uniform distribution.
  This is a reasonable result following the laws of nature, means the maximum entropy.
\begin{example}
Give a probability distribution as follows:
    \label{example2}
    \begin{equation}\nonumber
        \begin{split}
          {P}:{p_1}=&p_2=\frac{1}{2}     \\
          {p_i}=&0,(i\ne1,2,i=3,...,n)     \\
        \end{split}
      \end{equation}
     Its corresponding negation, according to Iq \ref{negation_definition}, is shown as follows.\\
    \begin{equation}\nonumber
    \begin{split}
          \overline{P}:\overline{p_1}=&p_2=(2e^{-1}+n-2)^{-1}e^{-\frac{1}{2}}     \\
          \overline{p_i}=&(2e^{-1}+n-2)^{-1},(i\ne1,i=2,3,...,n) \\
        \end{split}
      \end{equation}
      Considering $n=2$,we can get the result as:
      \begin{equation}\nonumber
        \begin{split}
          {P}:{p_1}&=p_2=\frac{1}{2}    \\
          \overline{P}:\overline{p_1}&=p_2=(2e^{-1})^{-1}e^{-1/2}=\frac{1}{2}
        \end{split}
      \end{equation}
\end{example}
  The negation of $P$ is also $P$.
  From this result, we can guess that the uniform distribution is a fix point of the negation.
\par
Furthermore, the proposed exponential negation also has the property of order reversal. If $p_i>p_j$, we can find that $\overline{p_i}<\overline{p_j}$ from the definition.
It is obvious to get such a result. If the probability of event $A_1$ is greater than event $A_2$, it is reasonable that the probability of 'not $A_1$' is smaller than 'not $A_2$' from experience.
\par
Also, we note that this negation is not involutionary, coincide with Yager's negation  \cite{yager2014maximum} and Heyting intuitionistic logic \cite{heyting1966intuitionism}.
\setcounter{equation}{12}
\begin{equation}
  \begin{split}
    \overline{\overline{p_i}}&=A_{1}e^{-\overline{p_i}}=(\sum_{i}e^{-\overline{p_i}})^{-1}e^{-\overline{p_i}}\\
&=(\sum_{i}e^{-\frac{e^{-p_i}}{\sum_{i}e^{-p_i}}})^{-1}e^{-\frac{e^{-p_i}}{\sum_{i}e^{-p_i}}}\ne p_i
  \end{split}
\end{equation}
\par
This property can be understood by the Example \ref{example2}, the distribution will converge to a uniform distribution.
If the operation is involutionary, we will never get the uniform distribution in this example.
\par
Then, we figure out that uniform probability distribution is a fix point of the exponential negation, and we will use Shannon entropy \cite{shannon1948mathematical} to explain it.
Also, we will give some numerical examples to show the details and help understand.
Considering a uniform probability distribution: $P=\{p_i|p_i=\frac{1}{n}, (i=1,2,...,n)\}$.
By definition of the proposed exponential negation, we can calculate the negation distribution as:
\begin{equation}
  \begin{split}
    \overline{p_i}=\frac{e^{-p_i}}{\sum_{i}e^{-p_i}}=\frac{e^{\frac{1}{n}}}{ne^{\frac{1}{n}}}=\frac{1}{n}
  \end{split}
\end{equation}
\par
So, we can get the negative distribution as: $\overline{P}=\{\overline{p_i}| \overline{p_i}=\frac{1}{n}, (i=1,2,...,n)\}$
From the first perspective, we can easily note that when $p_1=p_2=...=p_i=p_j=...=p_n$, there is no doubt that $\overline{p_1}=\overline{p_2}=...=\overline{p_i}=\overline{p_j}=...=\overline{p_n}$ by the definition.
And following the basic character, $\sum_{i}\overline{p_i}=1$, so the negation of uniform distribution is also the same uniform distribution.
\par
From the second perspective, we explain it by Shannon entropy \cite{shannon1948mathematical}. Shannon's entropy is defined as follows:
 \begin{equation}
  \begin{split}
    H(P)=\sum_{i}p_{i}ln(p_i)
  \end{split}
\end{equation}
\par
$H(P)$ is a significant measurement for information of a probability distribution.
For a probability distribution we obtained, the greater entropy means the less information. Shannon entropy can also be seen as information volume \cite{Deng2020InformationVolume,deng2021fuzzymembershipfunction}
It is easy to prove that a uniform probability distribution has the maximum entropy in all distributions.
\begin{theorem}
Applying the proposed exponential negation to the probability distribution causes it to converge to the maximum entropy state.
\end{theorem}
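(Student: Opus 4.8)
The plan is to show that iterating the exponential negation map $T\colon P\mapsto\overline P$ drives every probability distribution to the uniform distribution $U=(1/n,\dots,1/n)$. The excerpt already establishes that $U$ is a fixed point of $T$, and it is standard (e.g.\ by concavity of $H$, or by Lagrange multipliers) that $U$ is the \emph{unique} maximiser of the Shannon entropy $H(P)=-\sum_i p_i\ln p_i$ on the simplex, with $H(U)=\ln n$. Hence it suffices to prove $T^{k}P\to U$ as $k\to\infty$; since $H$ is continuous on the simplex, this yields $H(T^{k}P)\to\ln n=\max H$, which is exactly the ``convergence to the maximum entropy state'' asserted.

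The tool I would use is the spread $d(P)=\max_i p_i-\min_i p_i$, a seminorm on the simplex that vanishes precisely at $U$ (since $\sum_i p_i=1$). Put $Z=\sum_{j}e^{-p_j}$ and $\delta=d(P)$. By the order-reversal property noted before the theorem, $\max_i\overline{p_i}$ is attained at the index where $p_i$ is smallest and $\min_i\overline{p_i}$ where $p_i$ is largest, so
\[
d(\overline P)=\frac{e^{-\min_j p_j}-e^{-\max_j p_j}}{Z}.
\]
Because every summand of $Z$ satisfies $e^{-p_j}\ge e^{-\max_k p_k}$, we have $Z\ge n\,e^{-\max_k p_k}$, and substituting this lower bound gives
\[
d(\overline P)\ \le\ \frac{e^{-\min_j p_j}-e^{-\max_j p_j}}{n\,e^{-\max_j p_j}}\ =\ \frac{e^{\delta}-1}{n}.
\]

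Now I would invoke the constraint $p_i\in[0,1]$, which forces $\delta\le 1$. On $[0,1]$ the convex function $t\mapsto e^{t}-1$ lies below its chord, i.e.\ $e^{t}-1\le(e-1)t$, so
\[
d(\overline P)\ \le\ \frac{e-1}{n}\,d(P).
\]
Since $e-1<2\le n$, the constant $\tfrac{e-1}{n}$ is strictly less than $1$ for \emph{every} $n\ge2$; in particular $T$ is a genuine contraction in the seminorm $d$ even when $n=2$, which is precisely the case where Yager's negation fails to reach maximum entropy. Iterating, $d(T^{k}P)\le\bigl(\tfrac{e-1}{n}\bigr)^{k}d(P)\to0$, and since $\sum_i(T^{k}P)_i=1$ squeezes each coordinate of $T^{k}P$ into $\bigl[\tfrac1n-d(T^{k}P),\ \tfrac1n+d(T^{k}P)\bigr]$, we conclude $T^{k}P\to U$. (Uniqueness of the fixed point drops out too: any fixed $P^{*}$ obeys $d(P^{*})\le\tfrac{e-1}{n}d(P^{*})$, hence $d(P^{*})=0$.)

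The main obstacle I expect is obtaining a contraction constant that is \emph{uniform} over the whole simplex: this is exactly where boundedness of the support, $p_i\in[0,1]$, is essential, since it gives $\delta\le1$ and licenses the chord inequality. If one wanted the stronger, step-by-step monotone statement $H(T^{k+1}P)\ge H(T^{k}P)$ (as suggested by the abstract), one would instead have to prove $H(\overline P)\ge H(P)$ directly; writing $\ln\overline{p_i}=-p_i-\ln Z$ gives the compact identity $H(\overline P)=\sum_i\overline{p_i}\,p_i+\ln Z$, and comparing this with $-\sum_i p_i\ln p_i$ is the genuinely delicate inequality. It is, however, not needed for the theorem as stated, for which the contraction argument above is self-contained.
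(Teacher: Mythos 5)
Your proof is correct, but it takes a genuinely different route from the paper's. The paper argues via entropy monotonicity: it writes out $H(\overline P)-H(P)$ explicitly and asserts that the resulting expression is nonnegative, concluding that each negation step increases entropy and that the process therefore settles at the maximum-entropy fixed point. (As written, the paper does not actually justify the key inequality $H(\overline P)\ge H(P)$ --- it is stated without proof --- nor does it explain why a bounded increasing entropy sequence must converge to the \emph{maximum} $\ln n$ rather than to some smaller limit; so the paper's argument has real gaps.) You instead prove that the negation map is a contraction in the spread seminorm $d(P)=\max_i p_i-\min_i p_i$, with the uniform ratio $\tfrac{e-1}{n}<1$ obtained from the chord bound $e^{t}-1\le(e-1)t$ on $[0,1]$; this gives geometric convergence $T^kP\to U$, uniqueness of the fixed point, and then $H(T^kP)\to\ln n$ by continuity of $H$. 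Your chain of estimates checks out (the order-reversal step, the bound $Z\ge n e^{-\max_k p_k}$, and the squeeze via $\sum_i p_i=1$ are all valid), and your argument actually delivers more than the paper's: an explicit convergence rate and a complete proof of convergence to uniform, including the $n=2$ case the paper emphasizes. What it does \emph{not} deliver --- as you correctly flag --- is the step-by-step monotonicity $H(\overline P)\ge H(P)$ that the paper's proof is organized around and that underlies its ``second law of thermodynamics'' interpretation; that inequality would still need a separate argument.
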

\begin{proof}
\begin{equation}
  \begin{split}
    H(P)&=-\sum_{i}p_{i}ln(p_i)\\
    H(\overline{P})&=-\sum_{i}Ae^{-p_i}ln(Ae^{-p_i})\\
    H(\overline{P})-H(P)&=-\sum_{i}Ae^{-p_i}ln(Ae^{-p_i})+\sum_{i}p_{i}ln(p_i)\\
    &=\sum_{i}[-Ae^{-p_i}ln(Ae^{-p_i})+p_{i}ln(p_i)]\\
    &=\sum_{i}[(\sum_{i}e^{-p_i})^{-1}e^{-p_i}(ln\sum_{i}e^{-p_i}+p_{i})+p_{i}lnp_i]\\
    &\ge0
  \end{split}
  \label{H}
\end{equation}
\end{proof}
\par
The result which is always positive or equaling to zero shows that every negative distribution will never have smaller entropy than the original one.
When $p_i=\frac{1}{n}$, $H(\overline{P})-H(P)=0$, the uniform distribution has the maximum entropy already, the entropy can't increase anymore. So, after the negation operation, it will stay at the maximum and become the fix point of our negation.
\par
This is also a strong evidence to show that our negation could reflect some essences of the real world.
The second law of thermodynamics \cite{callen1998thermodynamics} claims a theory that for an isolated system, it always changes in the direction of entropy increase.
Our negation operation is consistent with The second law of thermodynamics.
According to above reasoning, we can easily infer that after several negation iterations of entropy increase,
the distribution will approach the fix point. In another perspective, all the probability distribution will converge to uniform distribution.
\section{NUMERAL EXAMPLES}
\label{three}
In this section, three numeral examples are shown and compared with Yager's method. In short, the proposed exponential negation has the following desirable properties.
\par
In all tables, $P_i$ means the distribution, $n$ means the number of negation iteration.
Suppose three representative distribution for numerical calculation.
\begin{itemize}
  \item[1.] Any negation iteration before the final status increase entropy.
  \item[2.] The final status will achieve the fix point, which means all the probability distribution will converge to the uniform distribution.
 \end{itemize}
 \begin{example}
 Give a probability distribution as $P_1=\{p_{11},p_{12}\}=\{0,1\}$
 Table \ref{negation_table1} and Table \ref{negation_table2} show the results after given some iterative exponential negations on the distribution. Figure \ref{ExponentialFigure1} visualizes the process.
 Table \ref{yager_table1} shows the results of Yager's negation and Figure \ref{yager_figure} visualizes it.
 \begin{table}[H]
\centering
\caption{ $P_1$ exponential negative iteration results 1}
\begin{tabular}{ccccccc}
\midrule
\diagbox{$P_1$}{$n$} & 0     & 1     & 2     & 3     & 4     & 5     \\
\hline
      $p_{11}$      & 0     & 0.731 & 0.386 & 0.557 & 0.472 & 0.514  \\
      $p_{12}$      & 1     & 0.269 & 0.614 & 0.443 & 0.528 & 0.486  \\
\bottomrule
\end{tabular}
\label{negation_table1}
\end{table}

\begin{table}[H]
\center
\caption{ $P_1$ exponential negative iteration results 2}
\begin{tabular}{cccccc}
\midrule
\diagbox{$P_1$}{$n$} & 6     & 7     & 8     & 9     & 10 \\
\hline
      $p_{11}$      & 0.493 & 0.504 & 0.498 & 0.501 & 0.500 \\
      $p_{12}$     & 0.507 & 0.496 & 0.502 & 0.499 & 0.500 \\
\bottomrule
\end{tabular}
\label{negation_table2}
\end{table}

    \begin{figure}[H]
      \centering
      \includegraphics[scale=0.5]{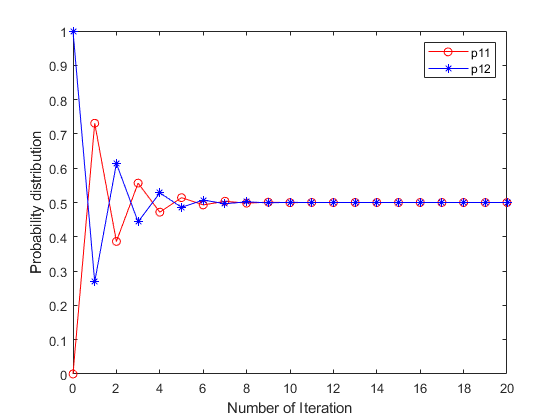}
      \caption{Change of probability distribution $P_1$ with the number of exponential negative iterations}
      \label{ExponentialFigure1}
    \end{figure}
      \par
\begin{table}[H]
\centering
\caption{$P_1$ Yager's negative iteration results \cite{yager2014maximum}}
\begin{tabular}{ccccccc}
\midrule
\diagbox{$P_1$}{$n$} & 0     & 1     & 2     & 3     & 4 &5\\
\hline
      $p_{11}$      & 0 & 1 & 0 & 1 & 0 &1\\
      $p_{12}$     & 1 & 0 & 1 & 0 & 1 &0\\
\bottomrule
\end{tabular}
\label{yager_table1}
\end{table}

\begin{figure}[H]
\centering
\includegraphics[scale=0.5]{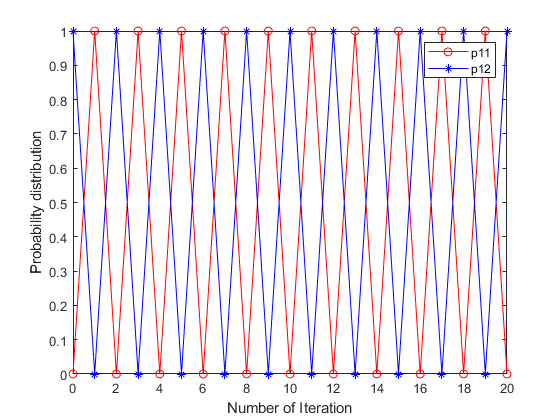}
\caption{Change of probability distribution $P_1$ with the number of Yager's negative iterations \cite{yager2014maximum}}
\end{figure}
\label{yager_figure}
 \end{example}

\par
For Yager's negation, a special result appears in this special case because $\overline{p_1}=1-p_2$ and $\overline{p_2}=1-p_1$, no matter how many iterations, the distribution will always alternate between these two distributions which do not reach the maximum entropy.
In this distribution, Yager's negation will no longer cause entropy increase.
However, Figure \ref{ExponentialFigure1} shows that the distribution of the proposed exponential negation will finally converge to the uniform distribution. The proposed exponential negation is an entropy increasing operation in this special case.
 \begin{example}
 \label{example4.2}
 Give a probability distribution as $P_2=\{p_{21},p_{22},p_{23}\}=\{0.1,0.4,0.5\}$
 Table \ref{negation_table3} show the results after given some iterative exponential negations on the distribution. Figure \ref{ExponentialFigure3} visualizes the process.
 Table \ref{table5} shows the results of Yager's negation and Figure \ref{yager_figure3} visualizes it.
\begin{table}[H]
\centering
\caption{$P_2$ exponential negative iterations results}
\begin{tabular}{cccccccc}
\midrule
 \diagbox{$P_2$}{$n$} & 0 & 1 &2 &3 &4 &5 &6 \\
\hline
      $p_{21}$      & 0.100   &0.414&0.306&0.342&0.330&0.334&0.333\\
      $p_{22}$      & 0.400   &0.307&0.342&0.331&0.334&0.333&0.333\\
      $p_{23}$      & 0.500   &0.278&0.352&0.327&0.335&0.332&0.333\\
\bottomrule
\end{tabular}
\label{negation_table3}
\end{table}
\begin{figure}[H]
  \centering
  \includegraphics[scale=0.5]{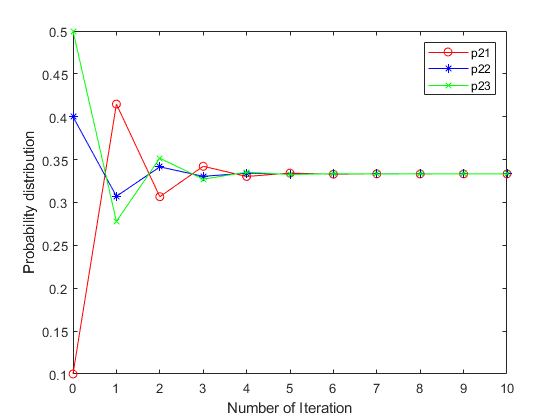}
  \caption{Change of probability distribution $P_2$ with the number of exponential negative iterations}
  \label{ExponentialFigure3}
\end{figure}
\begin{table}[H]
\centering
\caption{$P_2$ Yager's negative iterations results \cite{yager2014maximum}}
\begin{tabular}{cccccccc}
\midrule
 \diagbox{$P_2$}{$n$} & 0 & 1 &2 &3 &4 &5 &6 \\
\hline
    $p_{21}$    & 0.100   & 0.450  & 0.275 & 0.363 & 0.319 & 0.341 & 0.330 \\

    $p_{22}$    & 0.400   & 0.300   & 0.350  & 0.325 & 0.338 & 0.331 & 0.334 \\

    $p_{23}$     & 0.500   & 0.250  & 0.375 & 0.313 & 0.344 & 0.328 & 0.336 \\
\bottomrule
\end{tabular}
\label{table5}
\end{table}

\begin{figure}[H]
  \centering
  \includegraphics[scale=0.5]{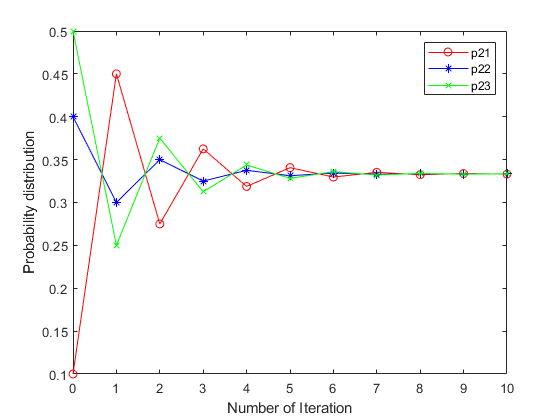}
  \caption{Change of probability distribution $P_2$ with the number of Yager's negative iterations \cite{yager2014maximum}}
  \end{figure}
  \label{yager_figure3}
\end{example}
 One advantage of the proposed negation is that it converges faster than Yager's negation is. In Example \ref{example4.2} , the Yager's method and the proposed negation can both converge to a uniform distribution. However, the convergence speed of Yager's negation is obviously slower than the convergence speed of our proposed negation. This is because the proposed exponential negation has a greater rate of change than Yager's negation, so it will be more efficient in obtaining information.
  When the precision reaches three decimal places, the tenth iteration of Yager's negation will converge to the uniform distribution,
  and our negation will converge to the uniform distribution of this precision after six iterations.

\begin{example}
 Give a probability distribution as $P_3=\{p_{31},p_{32},p_{33},p_{34},p_{35}\}=\{0.1,0.13,0.17,0.3,0.4\}$
 Table \ref{table6} show the results after given some iterative exponential negations on the distribution. Figure \ref{exponential5} visualizes the process.
 Table \ref{table7} shows the results of Yager's negation and Figure \ref{Yager5} visualizes it.
\begin{table}[H]
\centering
\caption{$P_3$ exponential negative iteration results}
\begin{tabular}{ccccccc}
\midrule
 \diagbox{$P_3$}{$n$} & 0 & 1 &2 &3 &4 &5 \\
\hline
    $p_{31}$     & 0.100   & 0.224 & 0.195 & 0.201 & 0.200 & 0.200    \\
       $p_{32}$     & 0.130  & 0.218 & 0.197 & 0.201 & 0.200 & 0.200    \\

       $p_{33}$     & 0.170  & 0.209 & 0.198 & 0.200 & 0.200 & 0.200    \\

       $p_{34}$     & 0.300   & 0.184 & 0.203 & 0.199 & 0.200 & 0.200    \\

       $p_{35}$     & 0.400   & 0.166 & 0.207 & 0.199 & 0.200 & 0.200  \\
\bottomrule
\end{tabular}
\label{table6}
\end{table}

  \begin{figure}[H]
    \centering
    \includegraphics[scale=0.5]{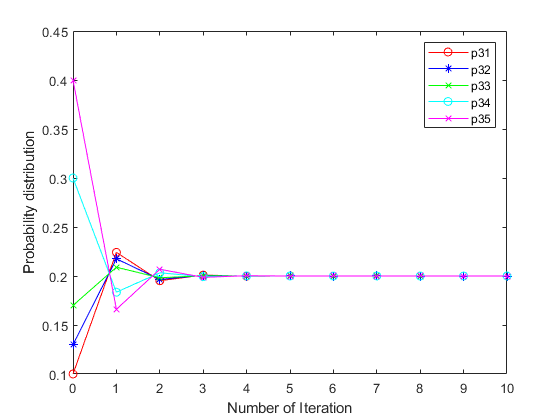}
    \caption{Change of probability distribution $P_3$ with the number of exponential negative iterations}
    \label{exponential5}
  \end{figure}

\begin{table}[H]
\centering
\caption{$P_3$ Yager's negative iteration results \cite{yager2014maximum}}
\begin{tabular}{ccccccc}
\midrule
 \diagbox{$P_3$}{$n$} & 0 & 1 &2 &3 &4 &5 \\
\hline
     $p_{31}$    & 0.100   & 0.225 & 0.194 & 0.202 & 0.200 & 0.200 \\

    $p_{32}$    & 0.130  & 0.218 & 0.196 & 0.201 & 0.200 & 0.200 \\

    $p_{33}$    & 0.170  & 0.208 & 0.198 & 0.200 & 0.200 & 0.200 \\

    $p_{34}$     & 0.300   & 0.175 & 0.206 & 0.198 & 0.200 & 0.200 \\

    $p_{35}$     & 0.400   & 0.150  & 0.213 & 0.197 & 0.201 & 0.200 \\
\bottomrule
\end{tabular}
\label{table7}
\end{table}

\begin{figure}[H]
  \centering
  \includegraphics[scale=0.5]{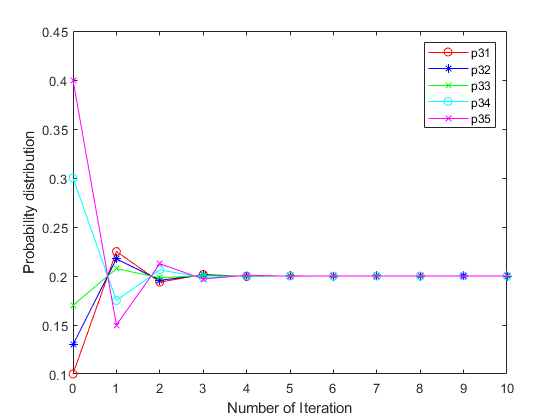}
  \caption{Change of probability distribution $P_3$ with the number of Yager's negative iterations \cite{yager2014maximum}}
  \label{Yager5}
\end{figure}
\end{example}
It can be seen from the Figure \ref{exponential5} and Figure \ref{Yager5} that the convergence images of the two negations are very similar which means the convergence process of the two is similar. It can be further found from the Table \ref{table6} and Table \ref{table7} that in this distribution,
in the case of three bits, they all converge to a uniform distribution in the fourth iteration.
But after comparing more accurate values, we found that Yager's negation converged to a uniform distribution at the 10th time, and the exponential negation converged to a uniform distribution at the 8th iteration.

\par
In general, the more elements in the distribution, the more similar the convergence process of the two negations, but for the special distribution $P_1$, the situation will become completely different.
The entropy in the proposed exponential negation will converge to a uniform distribution in this special binary case which is common in the real world. This property means that our proposed mathematical model can better reflect the actual situation.
\par
We can see that all the examples converge to the uniform distribution with our exponential negation.
In addition, an interesting point is that the numbers of iteration are different, the more elements the distribution has, the smaller number is required to converge to uniform distribution.
$P_1$ is required the biggest number in the three examples, we can infer that it also needs the biggest one in all distributions.
 As mentioned above, the entropy of $P_1$ is $H(P)=\sum_ip_{1i}ln(p_{1i})=0$, $H(P)>0$, $P_1$ has the minimum entropy in all distribution.
So it needs to do more negation calculations to reach the maximum entropy.

\section{CONCLUSION}
\label{five}
In this article, a new negation method, called as exponential negation, is presented. The proposed exponential negation has many desirable properties. For example, it is illustrated that all the probability distributions will converge to a uniform distribution after multiple negation iterations.
In addition, it can still converge very well even in the special binary situations. The proposed exponential negation can converge in a faster speed than the existed negation which means that it can get the opponent information from the probability distribution more efficient. Finally, it coincides with the second law of thermodynamics due to its entropy increase process.

\section*{Acknowledgment}
The work is partially supported by National Natural Science Foundation of China (Grant No. 61973332), JSPS Invitational Fellowships for Research in Japan (Short-term).
%
%
%
%
\section*{Conflict of Interest}
 The authors have no conflicts of interest to declare that are relevant to the content of this article.

\end{document}